\newtheorem{proposition}{Proposition}
\begin{document}

\title{Towards Efficient Few-shot Graph Neural Architecture Search via Partitioning Gradient Contribution}

\author{Wenhao Song}
\authornote{Equal Contribution.}
\affiliation{%
  \institution{College of Software, Jilin University}
  \city{Changchun}
  \state{Jilin}
  \country{China}
}
\email{songwh23@mails.jlu.edu.cn}

\author{Xuan Wu}
\authornotemark[1]
\affiliation{%
  \institution{College of Computer Science and Technology, Jilin University}
  \city{Changchun}
  \state{Jilin}
  \country{China}
}
\email{wuuu22@mails.jlu.edu.cn}

\author{Bo Yang}
\affiliation{%
  \institution{Key Laboratory of Symbolic Computation and Knowledge Engineering of Ministry of Education, Jilin University}
  \city{Changchun}
  \state{Jilin}
  \country{China}
}
\email{ybo@jlu.edu.cn}

\author{You Zhou}
\authornotemark[2]
\affiliation{%
  \institution{College of Software, Jilin University}
  \city{Changchun}
  \state{Jilin}
  \country{China}
}
\email{zyou@jlu.edu.cn}

\author{Yubin Xiao}
\affiliation{%
  \institution{College of Computer Science and Technology, Jilin University}
  \city{Changchun}
  \state{Jilin}
  \country{China}
}
\email{xiaoyb21@mails.jlu.edu.cn}

\author{Yanchun Liang}
\affiliation{%
  \institution{School of Computer Science, Zhuhai College of Science and Technology}
  \city{Zhuhai}
  \state{Guangdong}
  \country{China}
}
\email{ycliang@jlu.edu.cn}

\author{Hongwei Ge}
\affiliation{%
  \institution{College of Computer Science and Technology, Dalian University of Technology}
  \city{Dalian}
  \state{Liaoning}
  \country{China}
}
\email{hwge@dlut.edu.cn}

\author{Heow Pueh Lee}
\affiliation{%
  \institution{Department of Mechanical Engineering, National University of Singapore}
  \country{Singapore}
}
\email{mpeleehp@nus.edu.sg}

\author{Chunguo Wu}
\authornote{Corresponding Authors.}
\affiliation{%
  \institution{College of Computer Science and Technology, Jilin University}
  \city{Changchun}
  \state{Jilin}
  \country{China}
}
\email{wucg@jlu.edu.cn}
\renewcommand{\shortauthors}{Wenhao Song et al.}

\begin{abstract}
\noindent To address the weight coupling problem, certain studies introduced few-shot Neural Architecture Search (NAS) methods, which partition the supernet into multiple sub-supernets. However, these methods often suffer from computational inefficiency and tend to provide suboptimal partitioning schemes. To address this problem more effectively, we analyze the weight coupling problem from a novel perspective, which primarily stems from distinct modules in succeeding layers imposing conflicting gradient directions on the preceding layer modules. Based on this perspective, we propose the Gradient Contribution (GC) method that efficiently computes the cosine similarity of gradient directions among modules by decomposing the Vector-Jacobian Product during supernet backpropagation. Subsequently, the modules with conflicting gradient directions are allocated to distinct sub-supernets while similar ones are grouped together. To assess the advantages of GC and address the limitations of existing Graph Neural Architecture Search methods, which are limited to searching a single type of Graph Neural Networks (Message Passing Neural Networks (MPNNs) or Graph Transformers (GTs)), we propose the Unified Graph Neural Architecture Search (UGAS) framework, which explores optimal combinations of MPNNs and GTs. The experimental results demonstrate that GC achieves state-of-the-art (SOTA) performance in supernet partitioning quality and time efficiency. In addition, the architectures searched by UGAS+GC outperform both the manually designed GNNs and those obtained by existing NAS methods. Finally, ablation studies further demonstrate the effectiveness of all proposed methods.
\end{abstract}

\begin{CCSXML}
<ccs2012>
   <concept>
       <concept_id>10010147.10010178</concept_id>
       <concept_desc>Computing methodologies~Artificial intelligence</concept_desc>
       <concept_significance>500</concept_significance>
       </concept>
   <concept>
       <concept_id>10010147.10010178.10010205</concept_id>
       <concept_desc>Computing methodologies~Search methodologies</concept_desc>
       <concept_significance>500</concept_significance>
       </concept>
 </ccs2012>
\end{CCSXML}

\ccsdesc[500]{Computing methodologies~Artificial intelligence}
\ccsdesc[500]{Computing methodologies~Search methodologies}

\keywords{Neural Architecture Search; Graph Neural Network; Weight Coupling Problem; Supernet Partitioning Method}
\maketitle

\newcommand\kddavailabilityurl{https://doi.org/10.5281/zenodo.15520482}

\ifdefempty{\kddavailabilityurl}{}{
\begingroup\small\noindent\raggedright\textbf{KDD Availability Link:}\\

The source code of this paper has been made publicly available at Source code: \url{\kddavailabilityurl}.
\endgroup
}

\section{Introduction}
\noindent In recent years, Neural Architecture Search (NAS) methods \cite{chen_llm, QIU2023110671, zhou2024de, wu2024} have achieved remarkable performance across various tasks, e.g., discovering novel Graph Neural Network (GNN) architectures. However, traditional NAS methods are often computationally expensive, because they require training and evaluating multiple candidate architectures from scratch during the whole search process \cite{Zoph2017,li_survey,wu_incorporating_2023}.

To reduce the required computing resources, certain studies have proposed one-shot NAS methods, e.g., OS \cite{Pham2018} and SPOS \cite{Guo2020}. Specifically, these methods conceptualize a single supernet as a directed acyclic graph that encapsulates the entire search space, with subnets (i.e., candidate architectures) defined as specific subgraphs sampled from the supernet (see Figure~\ref{Supernet}(A)). During the search process, only the supernet is trained to obtain weights. Subsequently, these trained weights are directly reused to evaluate the performance of each subnet, significantly reducing computational costs compared to traditional NAS methods.

\begin{figure}[!t]
\centering
\includegraphics[width=1\columnwidth]{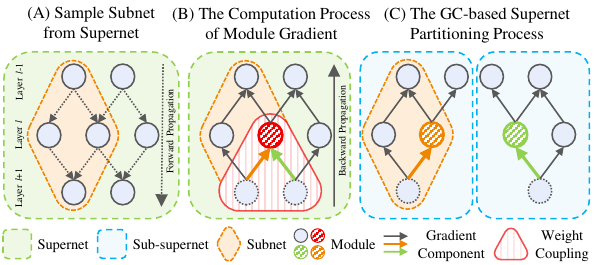}
\caption{(A) illustrates the process of sampling a subnet from the supernet. (B) highlights the weight coupling issue within the shadowed module, which arises from modules in succeeding layers providing conflicting gradient directions to this module. (C) illustrates the proposed Gradient Contribution (GC)-based supernet partitioning method, which allocates the modules with conflicting gradient directions into distinct sub-supernets. Therefore, the weights inherited by the subnet are more aligned with those obtained when training this subnet from scratch, significantly alleviating the weight coupling problem.} 
\vspace{-0.4cm}
\label{Supernet}
\end{figure}

However, because the supernet couples multiple subnets through weight sharing, directly using its weights to evaluate the subnets may result in performance deviations compared to training each subnet independently from scratch \cite{Guo2020, Pourchot2020, Zhao2021}. To address this problem, certain few-shot NAS methods have been proposed. For example, Zhao et al. \cite{Zhao2021} proposed the FS method, which selects specific layers and then exhaustively partitions the supernet into multiple sub-supernets based on the number of candidate modules in the selected layers. Subsequently, each subnet directly inherits the weights from the corresponding sub-supernet. Instead of exhaustively partitioning, Hu et al. \cite{Hu2022} introduced a partitioning rule based on the similarity of the gradient matching score, where modules with higher similarity gradients are grouped into the same sub-supernet. However, computing the gradient of each module requires retraining the temporary supernet (pruning all other modules in the corresponding layer), leading to substantial computing overhead.

Unlike these few-shot NAS methods, we introduce a novel perspective to analyze the weight coupling issue, which arises from the conflicting gradient directions contributed by multiple modules in the $l+1$th layer to the modules in the $l$th layer, as shown in Figure~\ref{Supernet}(B). Therefore, in this work, we investigate the following research question: 

\noindent \textit{Can modules with conflicting gradient directions be allocated into distinct sub-supernets to effectively address the weight coupling issue without introducing additional architectural training overhead?}

\noindent To address this research question, we propose a novel partitioning method called Gradient Contribution (\textbf{GC}). Specifically, GC first decomposes the Vector-Jacobian Product (VJP) \cite{vjp} of the \mbox{$l+1$th} layer modules to compute the gradient contributions of the $l+1$th layer modules to the $l$th layer modules during backpropagation, and then computes the cosine similarity of these gradient contributions. To allocate modules with conflicting gradient directions into distinct sub-supernets, GC formulates the task as a Minimum Cut Problem (MCP), treating the $l+1$th layer modules as nodes and the cosine similarity values as the edge weights between nodes. An MCP solver, e.g., the Stoer-Wagner algorithm \cite{stoer1997simple}, is subsequently employed to partition these $l+1$th layer modules into two disjoint sets. By traversing all layers except the first, GC identifies the top-$k$ layers with the smallest cut weights, ultimately partitioning the supernet into the number of $2^k$ sub-supernets. This approach ensures that the modules with higher similarity in gradient contributions are grouped into the same sub-supernet, while the modules with conflicting gradient directions are assigned to distinct sub-supernets.

To assess the effectiveness of GC, we focus on the task of discovering novel GNN architectures and propose the Unified Graph Neural Architecture Search (\textbf{UGAS}) framework. In contrast to prior Graph Neural Architecture Search (GNAS) studies \cite{PAS,MTGC,AutoGT}, which limit their search to either Message Passing Neural Networks (MPNNs) or Graph Transformers (GTs) due to the significant structural differences between MPNNs and GTs, UGAS concurrently explores both MPNNs and GTs, striving for superior performance. Specifically, UGAS incorporates four MPNNs and two GTs as optional modules within the backbone block of candidate subnets for all datasets. In addition, we design a tailored mechanism to fuse the features captured by different kinds of GNN modules. \textbf{To the best of our knowledge, this work is the first to introduce a unified GNAS framework capable of searching for the optimal combination of MPNNs and GTs.} In addition, it is worth noting that \textbf{the proposed UGAS is a generic framework compatible with different NAS search paradigms}, e.g., Genetic Algorithm (GA)-based and differentiable-based paradigms (see Section~\ref{sec4.2} and Appendix~\ref{appendix:darts} for more details).

To verify the effectiveness of the proposed GC method and UGAS framework, we conduct extensive experiments on ten widely adopted GNN datasets \cite{Benchmark_GNN, LRGB, hu2020ogb}. The experimental results demonstrate that the architectures searched by both differentiable-based and GA-based implementations of UGAS outperform twelve manually designed GNNs and other GNAS-searched architectures across all datasets. In addition, GC outperforms state-of-the-art (SOTA) few-shot NAS methods in terms of supernet partitioning quality and time efficiency. Note that GC requires only 21\% of the time needed by GM \cite{Hu2022} for partitioning the supernet, while achieving a superior partitioning scheme. Finally, ablation studies demonstrate the effectiveness of integrating MPNNs and GTs, as well as parameter settings.

The key contributions of this work are as follows.

\textbf{I)} We propose a novel perspective to analyze the weight coupling problem, which arises from the conflicting gradients contributed by multiple modules in the succeeding layer to those in the preceding layer.

\textbf{II)} We propose the GC method, which decomposes the VJP to efficiently compute the similarity of gradient directions among modules within the same layer, thereby partitioning the supernet.  

\textbf{III)} We propose the UGAS framework, which incorporates diverse MPNNs and GTs as optional modules of the GNN backbone block, and effectively fuses their output features. 

\textbf{IV)} The experimental results demonstrate that our proposed GC achieves SOTA performance in terms of supernet partition quality and time efficiency. In addition, the architectures searched by UGAS outperform both the manually designed and other GNAS-searched ones across diverse GNN datasets.

\vspace{-0.2cm}
\section{Related Work}
\noindent This section first presents recent NAS methods, followed by an introduction of the manually designed GNNs.

\subsection{Neural Architecture Search (NAS)}
\label{sec2.2}
\noindent NAS aims to automatically design well-performing network architectures within a predefined search space. However, most NAS methods require substantial computing resources to train and evaluate all candidate architectures \cite{Zoph2018, Pham2018, Real2019}. In contrast to these approaches, Liu et al. \cite{Liu2019} incorporated the concept of one-shot NAS and proposed OS method, which trains a single supernet encompassing all candidate architectures (subnets), with each subnet directly inheriting the weights from the supernet. While efficient, one-shot NAS methods suffer from weight coupling among subnets, resulting in unreliable weights.

To address this issue, recent NAS studies \cite{Zhao2021, Hu2022} introduced the concept of few-shot NAS and partitioned the supernet into multiple sub-supernets, allowing subnets to inherit weights from their corresponding sub-supernets instead of directly from the supernet. For example, Oh et al. \cite{SBS} proposed a zero-cost method to partition supernet. Specifically, they quantified the number of nonlinear functions in each module and assigned them to different sub-supernets, aiming to keep the number of nonlinear functions across sub-supernets as balanced as possible. However, these prior few-shot NAS methods fail to account for the underlying cause of the weight coupling problem, which primarily arises from the conflicting gradient directions of modules within the same layer during backpropagation. This critical oversight leads to suboptimal supernet partitioning schemes. In contrast to these methods, we mitigate the weight coupling problem by decomposing the gradient directions during backpropagation and partitioning the supernet based on the cosine similarity of gradient directions across modules.

\subsection{Graph Neural Network (GNN)}
 \noindent GNNs can be broadly divided into two categories, i.e., MPNNs and GTs. MPNNs (e.g., GCN \cite{GCN}, and GIN \cite{GIN}) consist of two key stages, i.e., message passing and readout \cite{Gilmer2017}, which are highly effective in processing graph-structured data by aggregating information from neighboring nodes \cite{Gilmer2017}. However, they face challenges such as over-smoothing  (i.e., the difficulty in distinguishing node features) \cite{oono2019} and over-squashing (i.e., the difficulty in transmitting information over long distances) \cite{Alon2021, Topping2022}. As another category of GNNs, GTs (e.g., SAN \cite{SAN}, GRIT \cite{GRIT}) leverage global attention mechanism to capture both local and global information. While global attention addresses the limitations of MPNNs, it introduces a computational complexity that grows quadratically with the number of nodes, posing significant challenges when applied to large-scale graphs.

Recent studies have proposed GNN frameworks that integrate MPNNs and GTs \cite{GraphTrans, GraphGPS, TIGT}. These methods preserve the sensitivity of MPNNs to local structures while enhancing the ability to process global information through the incorporation of GTs. However, these methods require substantial manual effort to design the combination of MPNNs and GTs, which may result in suboptimal combinations. To better address this problem, we propose UGAS, a GNAS framework designed to identify the optimal combination of MPNNs and GTs. Notably, UGAS is a general framework that can be seamlessly integrated into various NAS search paradigms. This work implements UGAS using Evolutionary Computation and Differentiable Gradient methods, with the Reinforcement Learning-based version left for future work.

\vspace{-0.2cm}
\section{Preliminary}
\noindent This section introduces the concepts of the Vector-Jacobian Product and the Minimum Cut Problem.
\vspace{-0.2cm}
\subsection{Vector-Jacobian Product (VJP)}
\noindent The Vector-Jacobian Product (VJP) \cite{vjp, autodiff} is a fundamental operation in the backpropagation algorithm, crucial for computing gradients during network training. Formally, let \(f:\mathbb{R}^{d_1} \rightarrow \mathbb{R}^{d_2}\) be a neural network mapping a \(d_1\)-dimensional input vector \(x\) to a \(d_2\)-dimensional output vector \(y\). The VJP operation is defined as the product of two mathematical entities: the transpose of the Jacobian matrix of \(f\) evaluated at \(x_0\), and the gradient of the loss function \(\mathcal{L}\) with respect to the model's output \(y\). This operation is expressed as:
\begin{align}
\nabla_x \mathcal{L} = \left( \partial_x f(x_0) \right)^T \nabla_y \mathcal{L},
\end{align}
where $\nabla_x \mathcal{L} \in \mathbb{R}^{d_1}$ is the gradient of the loss $\mathcal{L}$ with respect to the input $x$, $\partial_x f(x_0) \in \mathbb{R}^{d_2 \times d_1}$ is the Jacobian matrix of $f$ evaluated at $x_0$, and $\nabla_y \mathcal{L} \in \mathbb{R}^{d_2}$ is the gradient of the loss with respect to the output $y$.

\subsection{Minimum Cut Problem (MCP)} 
\noindent The Minimum Cut Problem aims to minimize the total weight of edges between two vertex subsets of a graph. Specifically, given a graph \( G = (X, E) \), where \( X \) and \( E \) denote the set of vertices and edges, respectively, each edge \( (u, v) \in E \) is assigned a weight \( w(u, v) \). A cut in the graph partitions \( X \) into two disjoint subsets \( \Gamma \) and \( X \setminus \Gamma \), such that each vertex belongs to exactly one subset. The weight of the cut, i.e., the sum of the weights of all edges that have one endpoint in \( \Gamma \) and the other endpoint in \( X \setminus \Gamma \), is defined as follows:
\begin{align}
\text{min}\sum\nolimits_{(u, v) \in E, u \in \Gamma, v \in X \setminus \Gamma} w(u, v).
\end{align}
In this work, we adopt the classic Stoer-Wagner algorithm \cite{stoer1997simple} to identify the optimal cuts for all GNN backbone layers except the first, due to its efficiency and robustness.
 
\section{Methods}

\noindent This section first describes the supernet architecture, followed by the overall search framework of the proposed \textbf{UGAS}, and concludes with the novel supernet partition method, \textbf{GC}.

\vspace{-0.2cm}
\subsection{Supernet Architecture}
\label{sec4.1}

\begin{figure}[!t]
\centering

\includegraphics[width=0.8\columnwidth]{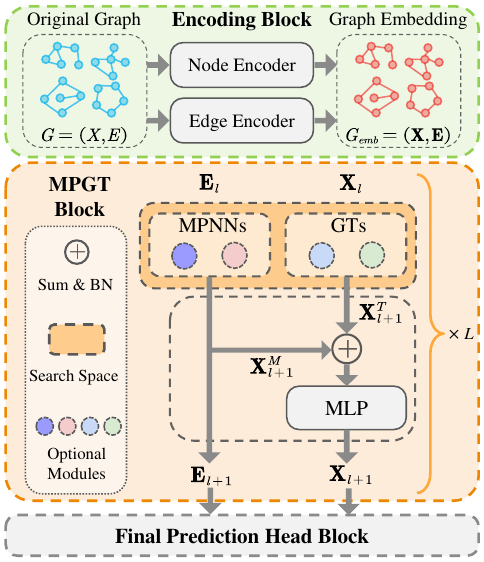} 
\caption{Illustration of the supernet architecture.}
\label{MPGT}
\vspace{-0.4cm}
\end{figure}
\noindent As shown in Figure~\ref{MPGT}, the supernet $\mathcal{A}$ is made up of three blocks, namely the encoding block, backbone block, and prediction head block. The encoding block encodes the initial features of nodes and edges. Then, the backbone block comprises multiple stacked Message Passing and Graph Transformers (MPGT) layers, thus referred to as the MPGT block. Finally, the prediction head block maps the processed features to the task-specific outputs.

\subsubsection{\textbf{Encoding Block}}
To achieve better performance, we adopt the encoding block proposed in the prior study \cite{GraphGPS}. Formally, given an input graph $G$ with \(N_G\) nodes and \(E_G\) edges, the encoding block models it as \( G_{emb} = (\mathbf{X}, \mathbf{E}) \), where \( \mathbf{X} \in\)  \( \mathbb{R}^{N_G \times d} \) and \( \mathbf{E} \in \)  \( \mathbb{R}^{E_G \times d} \) denote the features of nodes and edges, respectively. Here, \( d \) denotes the dimension of the feature.

\subsubsection{\textbf{MPGT Block}}

The entire MPGT block of supernet $\mathcal{A}$ comprises $L$ MPGT layers. Meanwhile, to search for architectures with high-level performance, each MPGT layer $\mathcal{M}_l$ contains the total number of $n$ distinct MPNN and GT modules, with each module adopted only once within the layer.
\begin{align}
\label{eq:supernet}
\mathcal{A} &= \{\mathcal{M}_1, \mathcal{M}_2, \ldots, \mathcal{M}_L\},
\\
\mathcal{M}_l &= \{\mathcal{M}_{l,1}, \mathcal{M}_{l,2}, \ldots, \mathcal{M}_{l,n}\}.
\end{align}
As shown in Table~\ref{tab:search_space}, the designed MPGT block comprises four MPNN modules and two GT modules. Note that this paper focuses exclusively on searching for the optimal combination of MPNNs and GTs; hence, the notations for encoding and prediction head blocks are omitted in Eq.~(\ref{eq:supernet}) for simplicity.
 
\begin{table}[!t]
\centering
\caption{The MPNN and GT modules adopted in this work}
\label{tab:search_space}
\resizebox{0.95\linewidth}{!}{
\begin{tabular}{c|c}
\hline 
MPNN & {GCN \cite{GCN}, GAT \cite{GAT}, GINE \cite{GINE}, and GENConv \cite{GENConv}} \\
\hline
GT & {Transformer \cite{Vaswani2017}, and Performer \cite{Performmer}} \\
\hline
\end{tabular}}
 \vspace{-0.2cm}
\end{table}
To effectively leverage the strengths of multiple MPNNs and GTs, we design a mechanism for fusing the outputs of these diversifying modules. Specifically, for the \(l\)th layer, the node feature $\mathbf{X}_{l+1}$ and edge feature $\mathbf{E}_{l+1}$ are computed as follows:
\begin{align}
\mathbf{X}^{M}_{l+1}, \mathbf{E}_{l+1} &= \frac{1}{n_1} \sum\nolimits_{i=1}^{n_1} \mathcal{M}_{l,i}\left(\mathbf{X}_l, \mathbf{E}_l \right), \\
\mathbf{X}^{T}_{l+1} &= \frac{1}{n-n_1} \sum\nolimits_{j=n_1+1}^{n} \mathcal{M}_{l,j}\left(\mathbf{X}_l\right), \\
\mathbf{X}_{l+1} &= \mathrm{MLP}_l\left(\mathbf{X}^{M}_{l+1} + \mathbf{X}^{T}_{l+1}\right),
\end{align}
where \( \mathbf{X}_l \in \mathbb{R}^{N_G \times d} \) and \( \mathbf{E}_l \in \mathbb{R}^{E_G \times d} \) denote the node and edge features at the $l$th layer, respectively. The variable \(n_1\) denotes the number of MPNNs used at the \(l\)th layer and the number of GT modules is $n-n_1$. The outputs, \(\mathbf{X}^{M}_{l+1}\) and \(\mathbf{X}^{T}_{l+1}\), are subsequently passed through a two-layer MLP to compute the layer-wise feature aggregation.

\subsubsection{\textbf{Prediction Head Block}}
The prediction head is responsible for mapping the graph embeddings produced by the MPGT Block, to the task-specific target space. In this work, the prediction head typically consists of MLPs, which compress high-dimensional graph embeddings into the specific output dimensions, e.g., the number of classes for classification tasks.
\begin{figure}[!t]
\centering
\includegraphics[width=0.95\columnwidth]{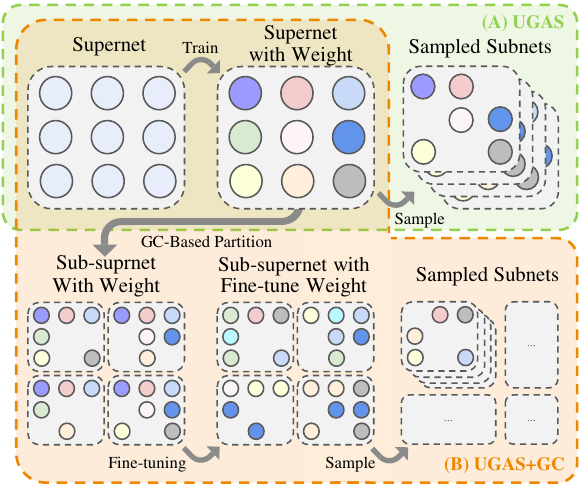}
\caption{Illustration of the subnet sampling processes from the supernet and sub-supernets, respectively. In subfigure (B), the supernet is partitioned into four sub-supernets ($k=2$) using the Gradient Contribution (GC) method.}
\label{UGAS framework}
  \vspace{-0.4cm}
\end{figure}
\subsection{UGAS Framework}
\label{sec4.2}
\noindent In this subsection, we outline the process of subnet sampling and the overall search process of UGAS, which is based on the Genetic Algorithm (GA). In addition, it is worth mentioning that UGAS is a general framework compatible with various NAS search paradigms. For details on its implementation using DARTS, please refer to Appendix~\ref{appendix:darts}.

\subsubsection{\textbf{Subnet}}
As mentioned in Section~\ref{sec4.1}, the search space (i.e., the supernet) consists of $L$ MPGT layers, each containing the number of $n$ distinct modules.  
The subnet $\mathcal{S}_i$ is sampled by randomly selecting certain modules from each layer of the supernet $\mathcal{A}$. Specifically, the subnet $\mathcal{S}_i$ is defined as follows:
\begin{align}
\mathcal{S}_i = \{\mathcal{M}_l' \mid \mathcal{M}_l' \in \mathcal{P}(\mathcal{M}_l), l\in \{1,2,\ldots,L\}\},
\end{align}
where $\mathcal{M}_l$ denotes the $l$th layer  of the supernet and $\mathcal{P}(\mathcal{M}_l)$ denotes the power set of  $\mathcal{M}_l$. As shown in Figure~\ref{UGAS framework}(A), we present certain subsets sampled from the supernet. In addition, it is worth mentioning that the supernet must be trained prior to the subset sampling process to obtain the weights of all modules.

\begin{algorithm}[!t]
\caption{The GA-based Search Process}
\label{alg:search}
\begin{flushleft}
\textbf{Input}: Supernet \( \mathcal{A} \), Population size \( P \),  Maximum iterations \( I \) \\
\textbf{Output}: The best subnet \( \mathcal{S}^{*} \)
\end{flushleft}
\begin{algorithmic}[1] 
\STATE \text{TrainSupernet}(\( \mathcal{A} \))
\STATE \( \mathcal{S} \leftarrow \text{RandomSample}(\mathcal{A}) \) \textcolor{gray}{\# the initial population $\mathcal{S}$}
\STATE \text{EvaluateFitness}(\( \mathcal{S} \))
\FOR{iteration \( i = 1 \) to \( I \)}
    \STATE \( \mathcal{S'} \leftarrow \text{Crossover} (\mathcal{S}) \cup  \text{ Mutation} (\mathcal{S}) \)
    \STATE \text{EvaluateFitness}(\( \mathcal{S'} \))
    \STATE  $\mathcal{S} \leftarrow $\text{CompetitiveSurvival}(\( \mathcal{S'}  \cup \mathcal{S}\))
\ENDFOR
\STATE \( \mathcal{S}^{\text{elite}} \leftarrow \text{SelectElite}(\mathcal{S}) \)
\STATE \( \mathcal{S}^* \leftarrow \text{Retrain}(\mathcal{S}^{elite}) \)
\\ \textcolor{gray}{\# Retrain the elite subnets to identify the best one}
\RETURN \( \mathcal{S}^{*} \)
\end{algorithmic}
\end{algorithm}
\subsubsection{\textbf{GA-based Search Process}}
This work employs a standard GA to search for subnets, consisting of initialization, selection, crossover, and mutation steps.

During the initialization process, UGAS randomly samples the number of $P$ subnets from the search space (i.e., the trained supernet) as the initial population, which is defined as follows:
\begin{align}
\mathcal{S}=\{\mathcal{S}_1, \mathcal{S}_2, \ldots, \mathcal{S}_P\}.
\end{align}
To maintain the diversity of the population, UGAS directly exploits all individuals in the current population as parents for the crossover and mutation operators. Subsequently, each operator generates the number of $P$ offspring.

Specifically, given two parents \(\mathcal{S}_a\) and \(\mathcal{S}_b\), for each layer, the crossover operator randomly selects a crossover point $c$, where $c\in\{1,...,n\}$, and then swaps the modules behind $c$ to provide two new offspring $\mathcal{S'}_a$ and $\mathcal{S'}_b$, defined as follows:
\begin{align}
\mathcal{S'}_{a,l} = \mathcal{S}_{a,l,1:c} \cup \mathcal{S}_{b,l,c+1:n},\\
\mathcal{S'}_{b,l} = \mathcal{S}_{b,l,1:c} \cup \mathcal{S}_{a,l,c+1:n}.
\end{align}
In addition, we set the probability of crossover occurring at each layer to \(p_c\) (see Appendix~\ref{appendix:Hyperparameter} for detailed parameter settings).

Given a parent subnet \(\mathcal{S}_a\), the mutation operator randomly flips one module in each layer. Specifically, if the parent subnet does not include this module, it is added to the child subnet; otherwise, it is removed. Additionally, the mutation probability for each layer is set to \(p_m\).

Subsequently, we directly evaluate the performance of the child individuals using their inherited weights. Only the top-$P$ individuals with the highest fitness values survive to the next iteration. The fitness function is defined as the metric corresponding to the downstream task, e.g., accuracy. When the maximum number of iterations is reached, we retrain the top five elite individuals with high fitness values from scratch to accurately assess their performance and ultimately retain the best individuals. Algorithm~\ref{alg:search} presents the complete GA-based search process.

\subsection{Gradient Contribution (GC)}
\label{sec4.3}

\noindent Different from prior studies \cite{Zhao2021, Hu2022}, we propose a novel perspective for analyzing the weight coupling issue. As shown in the gradient backpropagation paths of the computation graph in  Figure~\ref{Supernet}(B), the weights of modules in the preceding layer are updated by the gradients of all modules in the succeeding layer. It is inevitable that certain gradient directions may conflict, affecting the gradient descent trajectories and potentially causing the inherited subnets' weights to deviate from their true values that are trained from scratch.

Therefore, it is crucial to assign modules with conflicting gradients to distinct sub-supernets. To this end, we propose the efficient supernet partitioning method Gradient Contribution (GC).  Specifically, for two adjacent layers in the supernet, let \(\mathbf{y}_{l-1} = [\mathbf{y}_{l-1,1}, ..., \mathbf{y}_{l-1,n}] \) and \(\mathbf{y}_{l} = [\mathbf{y}_{l,1}, ..., \mathbf{y}_{l,n}]\) denote the output vectors of the $l-1$th and $l$th layers, respectively, where $l \in \{2,...,L\}$, $\mathbf{y}_{l-1,i}\in\mathbb{R}^{d_{i}}$, and $\mathbf{y}_{l,j}\in\mathbb{R}^{d_{j}}$. Note that all layers in the supernet share the same number of modules $n$. To explicitly distinguish modules between adjacent layers, we use $i$ to index modules in the $l-1$th layer and $j$ for those in the $l$th layer.

Therefore, the transformation from the output vector \( \mathbf{y}_{l-1} \) to the output vector \( \mathbf{y}_{l} \) is defined as follows:
\begin{equation}
    \mathbf{y}_{l} = f_{l}(\mathbf{y}_{l-1}) = [f_{l,1}(\mathbf{y}_{l-1}), f_{l,2}(\mathbf{y}_{l-1}), \ldots, f_{l,n}(\mathbf{y}_{l-1})],  
\end{equation}
where $f_{l,i}$ denotes the transformation function of the $i$th module. Assuming that \( \mathcal{L} \) denotes the loss function, the gradient with respect to \( \mathbf{y}_{l-1} \) can be expressed via the VJP as:  
\begin{align}
\nabla_{\mathbf{y}_{l-1}}\mathcal{L}=\left( \partial_{\mathbf{y}_{l-1}} f_{l}(\mathbf{y}_{l-1}) \right)^T \nabla_{\mathbf{y}_{l}} \mathcal{L}.
\end{align}
The term \( \partial_{\mathbf{y}_{l-1}} f_{l}(\mathbf{y}_{l-1}) \) denotes the Jacobian matrix, defined as follows:  
\begin{align}
\partial_{\mathbf{y}_{l-1}} f_{l}(\mathbf{y}_{l-1}) = \begin{bmatrix}
J_{l,1,1} & J_{l,1,2} & \ldots & J_{l,1,n} \\
J_{l,2,1} & J_{l,2,2} & \ldots & J_{l,2,n} \\
\vdots & \vdots & \ddots & \vdots \\
J_{l,n,1} & J_{l,n,2} & \ldots & J_{l,n,n}
\end{bmatrix} \in \mathbb{R}^{d_l \times d_{l-1}},
\end{align} 
 where \(d_l=\sum_{j=1}^n d_j\), \(d_{l-1}=\sum_{i=1}^n d_i\), and \( d_{j}\) and \( d_{i} \) denote the output dimensions of module \( \mathcal{M}_{l,j} \) and module \( \mathcal{M}_{l-1,i} \), respectively. Each Jacobian block matrix  \( J_{l,j,i} \) is defined as:  
\begin{align}
J_{l,j,i} =  \partial f_{l,j} \big/ \partial \mathbf{y}_{l-1,i}\in \mathbb{R}^{d_{j} \times d_{i}}.
\end{align} 
The gradient contribution of the \(j\)th module's output \(\mathbf{y}_{l,j}\) to \(\mathbf{y}_{l-1}\) is essentially the part of \(\nabla_{\mathbf{y}_{l-1}} \mathcal{L}\) contributed by \(\mathbf{y}_{l,j}\). According to the chain rule, this contribution \(C_{l,j}\) is defined as:
\begin{align}
C_{l,j} = \left( \frac{\partial \mathbf{y}_{l,j}}{\partial \mathbf{y}_{l-1}} \right)^\top \nabla_{\mathbf{y}_{l,j}} \mathcal{L} \in \mathbb{R}^{d_{l-1} \times 1}.
\end{align}
The term \(\frac{\partial \mathbf{y}_{l,j}}{\partial \mathbf{y}_{l-1}}\) denotes the block of the Jacobian matrix corresponding to the \(j\)th module, formed by horizontally concatenating: 
\begin{align}
\frac{\partial \mathbf{y}_{l,j}}{\partial \mathbf{y}_{l-1}} = \begin{bmatrix} J_{l,j,1} & J_{l,j,2} & \cdots & J_{l,j,n} \end{bmatrix} \in \mathbb{R}^{d_j \times d_{l-1}}.
\end{align}
This method provides a modular approach to decomposing the computation graph in complex networks, allowing for precise analysis of gradient contributions between modules.

\begin{algorithm}[!t]
\caption{The GC-based Supernet Partitioning Process}
\label{alg:enhanced_supernet_partition}
\begin{flushleft}
\textbf{Input}: Supernet \(\mathcal{A}\), number of partition points \( k \) \\
\textbf{Output}: Set of partitioned sub-supernets \(\mathcal{\bar{A}}\) \\
\end{flushleft}
\begin{algorithmic}[1] 
\FOR{\( l = 2 \) to \( L \)}
    \STATE Compute the  similarity \( S_{l,i,j},  \forall i, j \in \{1,...,n\}, \, i \neq j \)
    \STATE Compute the partition score \( \gamma_l \) 
    \STATE Determine the partition scheme \( \mathcal{P}_{l} =\{\Gamma_l, \mathcal{M}_l \setminus \Gamma_l\} \) 
\ENDFOR
\STATE Record the top-\(k\) schemes \(\mathcal{P}^{\text{top-}k}\) with the smallest scores
\STATE Generate all \( 2^k \) possible combinations of module sets
\FOR{\( i = 1 \) \textbf{to} \( 2^k \)}
\STATE Generate sub-supernet $\mathcal{\bar{A}}_i$ based on the \( i \)th combination
\ENDFOR
\STATE Retrain all sub-supernets
\STATE \textbf{return} \( \mathcal{\bar{A}} \)
\end{algorithmic}
\end{algorithm}

\subsection{GC-Based Supernet Partition}

\noindent As mentioned in Section~\ref{sec4.3}, this work uses $[C_{l,1},C_{l,2} \ldots, C_{l,n}]$ to measure the gradient contribution from $\mathcal{M}_{l} = \{\mathcal{M}_{l,1}, \ldots, \mathcal{M}_{l,n}\}$ to all modules in $\mathcal{M}_{l-1}$. Subsequently, the cosine similarity $S_{l,i,j}$ between two gradient contributions \( C_{l,i} \) and \( C_{l,j} \) is computed as follows:
\begin{align}
S_{l,i,j} =\frac{ C_{l,i} \cdot C_{l,j}} {\|C_{l,i}\| \|C_{l,j}\|}.
\end{align}
By computing the gradient contributions for all modules in \(\mathcal{M}_l\) and evaluating their mutual similarity, the modules with high similarity are partitioned into the same sub-supernet; otherwise, they are allocated to distinct sub-supernets. Specifically, to achieve this, GC treats all the $l$th layer modules as nodes and the cosine similarity values as the edge weights between nodes, formulating the task as a Minimum Cut Problem (MCP), whose objective is to minimize the cut weight (i.e., partition score \(\gamma_l\)), defined as follows:
\begin{align}
\gamma_{l} = \min_{\Gamma_{l} \subseteq \mathcal{M}_{l}} \sum\nolimits_{i \in \Gamma_{l}, j \in \mathcal{M}_{l} \setminus \Gamma_{l}} S_{l,i,j}.
\end{align}
To find the cut with the smallest cut weight for the $l$th layer, we adopt the classic MCP solver, Stoer-Wagner algorithm \cite{stoer1997simple} in all experiments conducted in this work. Therefore, all modules of the $l$th layer are divided into two sets, i.e.,$\Gamma_{l}$ and $\mathcal{M}_{l} \setminus \Gamma_{l}$.

\begin{figure}[!t]
\centering
\includegraphics[width=1\columnwidth]{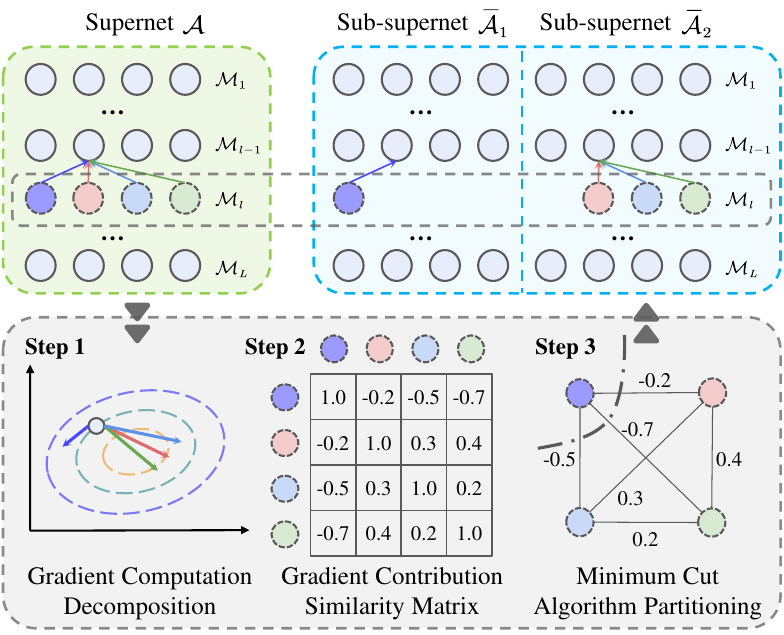} 
\caption{Illustration of the GC-based supernet partitioning process. All modules of the $l$th layer are partitioned into two sets based on the cosine similarity of their gradient contributions, corresponding to the $l$th layers of two sub-supernets (assuming $k=1$).}
\label{gc}
\vspace{-0.5cm}
\end{figure}

In this way, GC partitions the modules of each layer from 2 to $L$ into two sets. Finally, the top-\( k \) layers with the smallest \( \gamma_{l} \) are selected as the layers that need to be partitioned,  $ k \in \{1, 2,..., L-1\}$. For each of the selected layers, there are two available module sets (i.e., $\Gamma_{l}$ and $\mathcal{M}_{l} \setminus \Gamma_{l}$), ultimately resulting in a total of $2^k$ distinct sub-supernets. The pseudocode of GC is presented in Algorithm~\ref{alg:enhanced_supernet_partition}. As shown in Figure~\ref{gc}, we present the supernet partitioning process (assuming the supernet needs to be partitioned into two supernets, i.e., $k = 1$). The supernet \( \mathcal{A} \) is partitioned into two sub-supernets, i.e., $\mathcal{\bar{A}}_1$ and $\mathcal{\bar{A}}_2$, where $\mathcal{\bar{A}}_1 = \{\mathcal{M}_{1}, \ldots, \Gamma_{l}, \ldots, \mathcal{M}_{L}\}$ and $\mathcal{\bar{A}}_2 = \{\mathcal{M}_{1}, \ldots, \mathcal{M}_{l} \setminus \Gamma_{l}, \ldots, \mathcal{M}_{L}\}$. In addition, Appendix~\ref{secproof} provides a proof that partitioning based on gradient directions effectively mitigates the weight coupling problem.

Although GM \cite{Hu2022} also partitions the supernet based on the gradients between modules, its perspective on analyzing the weight coupling problem fundamentally differs from ours. Please see Appendix~\ref{appendix:gm} for details about the differences in the partitioning processes between GC and GM. Section~\ref{sec5.3} presents the comprehensive performance comparison between GC and GM.

\section{Experiments}
\label{sec5}
\noindent This section first presents the experimental setups, followed by extensive experimental results on various GNN datasets across different metrics. These include comparisons with the hand-designed GNN architectures, GNAS methods, and other weight-sharing NAS methods, as well as ablation studies. Please see Appendixes~\ref{appendixf} to \ref{appendixh} for more experimental results about the rationale for supernet partitioning based on early-stage gradients, the rationale for incorporating GC into UGAS, and the performance comparison of different minimum cut algorithms.

\begin{table*}[!t]
\centering
\caption{Performance comparison with manually designed GNN architectures}
\scalebox{0.85} {
\label{tab:performance_models}
\begin{tabular}{l|cccc|ccc}
\hline
\multirow{3}{*}{\textbf{Methods}} & \multicolumn{4}{c|}{\textbf{BGNN} \cite{Benchmark_GNN}} & \multicolumn{3}{c}{\textbf{LRGB} \cite{LRGB}}\\ 
\cline{2-8} 
& \textbf{Mnist} & \textbf{Cifar10} & \textbf{Pattern} & \textbf{Cluster} & \textbf{PascalVOC-SP} & \textbf{Peptides-func} & \textbf{Peptides-struct}\\
\cline{2-8}
& \textbf{Accuracy} $\uparrow$ & \textbf{Accuracy} $\uparrow$ & \textbf{Accuracy} $\uparrow$ & \textbf{Accuracy} $\uparrow$ & \textbf{F1 score} $\uparrow$ & \textbf{AP} $\uparrow$ & \textbf{MAE} $\downarrow$ \\
\hline
GCN (2017) \cite{GCN} $\dagger$ & $90.71 \pm 0.22$ & $55.71 \pm 0.38$ & $71.89 \pm 0.33$ & $68.50 \pm 0.98$ & $0.127 \pm 0.006$ & $0.593 \pm 0.002$ & $0.350 \pm 0.001$\\
GatedGCN (2017) \cite{GatedGCN} $\dagger$ & $97.34 \pm 0.14$ & $67.31 \pm 0.31$ & $85.57 \pm 0.09$ & $73.84 \pm 0.33$ & $0.287 \pm 0.022$ & $0.586 \pm 0.008$ & $0.342 \pm 0.001$ \\
GAT (2018) \cite{GAT}  $\dagger$& $95.54 \pm 0.21$ & $64.22 \pm 0.46$ & $78.27 \pm 0.19$ & $70.59 \pm 0.45$ &-&-&-\\
GIN (2019)\cite{GIN}  $\dagger$& $96.49 \pm 0.25$ & $55.26 \pm 1.53$ & $85.39 \pm 0.14$ & $64.72 \pm 1.55$ &-&-&-\\
GINE (2020) \cite{GINE} $\dagger$& - & - & - & - & $0.127 \pm 0.008$ & $0.550 \pm 0.008$ & $0.356 \pm 0.005$ \\
PNA (2020) \cite{PNA}  $\dagger$& $97.94 \pm 0.12$ & $70.35 \pm 0.63$ & - & - &-&-&-\\
DGN (2021) \cite{DGN}  $\dagger$& - & $72.84 \pm 0.42$ & $86.68 \pm 0.03$ & - &-&-&-\\
SAN (2021) \cite{SAN}  $\dagger$& - & - & $86.58 \pm 0.04$ & $76.69 \pm 0.65$ & $0.323 \pm 0.004$ & $0.638 \pm 0.012$ & $0.268 \pm 0.004$ \\
K-Subgraph SAT (2022) \cite{K_Subgraph_SAT} $\dagger$ & - & - & $86.85 \pm 0.04$ & $77.86 \pm 0.10$ &-&-&-\\
EGT (2022) \cite{EGT}  $\dagger$& $98.17 \pm 0.09$ & $68.70 \pm 0.41$ & $86.82 \pm 0.02$  &$\textbf{79.23} \boldsymbol{\pm} \textbf{0.35}$ &-&-&-\\
GraphGPS (2022) \cite{GraphGPS}  $\dagger$& $98.05 \pm 0.13$ & $72.30 \pm 0.36$ & $86.69 \pm 0.06$ & $78.02 \pm 0.18$ & $0.375 \pm 0.011$ & $0.654 \pm 0.004$ & $0.250 \pm 0.001$\\
TIGT (2024) \cite{TIGT}  $\dagger$& $98.23 \pm 0.13$ & $73.96 \pm 0.36$ & $86.68 \pm 0.06$ & $78.03 \pm 0.22$ &-&-&-\\
\hline
UGAS+OS (GA) & $98.39 \pm 0.15$ & $73.57 \pm 0.16$ & $86.00 \pm 0.05$ & $77.42 \pm 0.39$ & $0.400 \pm 0.006$ & $0.651 \pm 0.005$ & $\underline{0.250 \pm 0.003}$\\
UGAS+GC (GA) & $\textbf{98.50} \boldsymbol{\pm} \textbf{0.14}$ & $\textbf{74.14} \boldsymbol{\pm} \textbf{0.29}$ & $\textbf{86.89} \boldsymbol{\pm} \textbf{0.02}$ & $\underline{78.14 \pm 0.21}$ & $\underline{0.418 \pm 0.009}$ & $\underline{0.661 \pm 0.013}$ & $\textbf{0.247} \boldsymbol{\pm} \textbf{0.002}$\\
UGAS+OS (DARTS) & $98.23 \pm 0.12$ & $73.78 \pm 0.46$ & $86.59 \pm 0.09$ & $75.30 \pm 0.31$ & $\textbf{0.426} \boldsymbol{\pm} \textbf{0.015}$ & $0.659 \pm 0.010$ & $0.252 \pm 0.001$\\ 
UGAS+GC (DARTS)& \underline{$98.40 \pm 0.20$} & $\underline{74.01 \pm 0.36}$ & $\underline{86.74 \pm 0.05}$ & $76.87 \pm 0.21$ & $0.418 \pm 0.023$ & $\textbf{0.667} \boldsymbol{\pm} \textbf{0.007}$ & $0.251 \pm 0.000$\\ 
\hline
\end{tabular}}
\begin{tablenotes}
\item[] \ \ \ \ \ Note: Symbol $\dagger$ denotes that the results are taken from the original manuscript, while the rest are obtained in this work.
\end{tablenotes}
 \vspace{-0.4cm}
\end{table*}

\subsection{Experimental Setups}


\subsubsection{\textbf{Datasets}}
We select a range of graph datasets from various benchmarks to evaluate the performance of UGAS and GC. These include the Cifar10, Mnist, Pattern, and Cluster datasets from the Benchmarking Graph Neural Networks (BGNN) \cite{Benchmark_GNN}, the PascalVOC-SP, Peptides-Func, and Peptides-Struct datasets from the Long-Range Graph Benchmark (LRGB) \cite{LRGB}, and the Molhiv, Molsider, and Molbace datasets from the Open Graph Benchmark (OGB) for graph property prediction \cite{hu2020ogb}.
\subsubsection{\textbf{Comparison Methods}}

To comprehensively assess the performance of GC and UGAS, we choose extensive benchmarking methods for comparison, including twelve human-designed GNN architectures, namely GCN \cite{GCN}, GINE \cite{GINE}, GIN \cite{GIN}, GAT \cite{GAT}, GatedGCN \cite{GatedGCN}, PNA \cite{PNA}, DGN \cite{DGN}, SAN \cite{SAN}, K-Subgraph SAT \cite{K_Subgraph_SAT}, EGT \cite{EGT}, GraphGPS \cite{GraphGPS}, and TIGT \cite{TIGT}; five GNAS methods, namely PAS \cite{PAS}, GRACES \cite{GRACES}, AutoGT \cite{AutoGT}, MTGC \cite{MTGC}, and DCGAS \cite{DCGAS}; and five weight-sharing NAS methods, namely OS \cite{Pham2018}, FS \cite{Zhao2021}, GM \cite{Hu2022}, LID \cite{He2023}, and EFS \cite{SBS}.
\subsubsection{\textbf{Implementation Setups}}

In this paper, we implement UGAS using two architecture search methods, namely GA and DARTS \cite{Liu2019}. The parameter settings for GA and UGAS are provided in Appendix~\ref{appendix:Hyperparameter}, while Appendix~\ref{appendix:darts} presents a detailed introduction to DARTS. Additionally, the subnets searched by both GA and DARTS need to be retrained from scratch before the final evaluation. For GA, we select the top-$5$ architectures for retraining, with the final result derived from the best-performing architectures among them, While for DARTS, we retain the same number of subnets as the (sub-)supernet for retraining. The parameter settings for GC are provided in Appendix~\ref{appendix:Hyperparameter}. Finally, all experiments are executed on a computational environment equipped with a single NVIDIA A800 GPU with 80GB of memory.

\subsection{Comparison with the Hand-designed GNNs}

\noindent In this subsection, we compare the UGAS-searched architectures with twelve manually designed GNN architectures using datasets from BGNN \cite{Benchmark_GNN} and LRGB \cite{LRGB} in Table~\ref{tab:performance_models}. BGNN tests the architecture’s ability to extract local features from small to medium-sized graphs, while LRGB evaluates its capacity to capture distant node relationships in large graphs. The experimental results on BGNN demonstrate that, whether implemented based on GA or DARTS, the UGAS-searched architectures outperform the manually designed architectures on BGNN, owing to the proposed MPGT joint search framework. For example, compared to TIGT \cite{TIGT}, the average accuracy of architectures searched by UGAS+GC (GA) yields average accuracy improvements of 0.27\%, 0.18\%, 0.21\% and 0.11\% across the four datasets, respectively. On LRGB, compared to GraphGPS \cite{GraphGPS}, the architectures searched by UGAS+GC (DARTS) achieve average performance metrics improvements of 0.043 and 0.013 on the PascalVOC-SP and Peptides-func datasets, respectively. By integrating both MPNN and GT modules, our method not only achieves robust performance on small-scale graphs but also excels on large-scale graphs.

\begin{table}[!t]
\centering
\caption{Performance comparison with NAS methods}
\label{tab:compareGNAS}
\scalebox{0.9} {
\begin{tabular}{l|ccc}
\hline
\multirow{2}{*}{\textbf{Methods}} & \textbf{Molhiv} & \textbf{Molsider} & \textbf{Molbace} \\
\cline{2-4}
& \textbf{ROC-AUC} $\uparrow$ & \textbf{ROC-AUC} $\uparrow$ & \textbf{ROC-AUC} $\uparrow$\\
\hline
PAS (CIKM'21)  $\dagger$ & $71.19 \pm 2.28$ & $59.31 \pm 1.48$ & $76.59 \pm 1.87$ \\
GRACES (ICML'22)  $\dagger$ & $77.31 \pm 1.00$ & $61.85 \pm 2.56$ & $79.46 \pm 3.04$ \\
AutoGT (ICLR'23)  $\dagger$ & $74.95 \pm 1.02$ & - & $76.70 \pm 1.42$ \\
MTGC (NeurIPS'23)  $\dagger$ & - & $62.08 \pm 1.76$ & - \\
DCGAS (AAAI'24)  $\dagger$ & $78.04 \pm 0.71$ & $\textbf{63.46} \boldsymbol{\pm} \textbf{1.42}$ & $81.31 \pm 1.94$ \\
\hline
UGAS+OS (ICML'18) $\ddagger$ & $77.94 \pm 1.38$ & $61.88 \pm 0.49$ & $81.12 \pm 0.65$ \\
UGAS+FS (ICML'21) $\ddagger$ & $79.15 \pm 1.13$ & $62.71 \pm 0.50$ & $83.41 \pm 0.93$ \\
UGAS+GM (ICLR'22) $\ddagger$ & $79.34 \pm 0.36$ & $63.12 \pm 0.98$ & $83.62 \pm 1.20$ \\
UGAS+LID (AAAI'23) $\ddagger$ & $79.32 \pm 0.43$ & $61.75 \pm 0.38$ & $84.16 \pm 1.49$ \\
UGAS+EFS (AAAI'25) $\ddagger$ & $78.53 \pm 0.22$ & $63.36 \pm 0.55$ & $84.32 \pm 1.63$ \\
\hline
UGAS+GC (GA) & $\underline{79.95 \pm 0.70}$ & $\underline{63.42 \pm 0.71}$ & $\textbf{85.79} \boldsymbol{\pm} \textbf{1.91}$ \\
UGAS+GC (DARTS) & $\textbf{79.97} \boldsymbol{\pm} \textbf{0.51}$ & $62.31 \pm 0.90$ & $\underline{84.63\pm 1.21}$ \\
\hline
\end{tabular}}
\begin{tablenotes}
\item[] 
Note: Symbol $\ddagger$ denotes the results obtained from this work.
\end{tablenotes}
 \vspace{-0.2cm}
\end{table}

\subsection{Comparison with NAS Methods}
\label{sec5.3}

\subsubsection{\textbf{Performance Comparison with NAS methods}}
\label{5.3.1}

In this subsection, to assess the performance of UGAS, we conduct extensive experiments comparing it with existing GNAS methods, including four MPNN-based GNAS methods, namely PAS~\cite{PAS}, GRACES~\cite{GRACES}, MTGC~\cite{MTGC}, and DCGAS~\cite{DCGAS}, as well as one GT-based GNAS method AutoGT~\cite{AutoGT}, on three OGB datasets, namely Molhiv, Molsider, and Molbace. In addition, we include five weight-sharing NAS methods (implemented based on UGAS+GA with the same parameter settings as GC) to further demonstrate the effectiveness of GC. As shown in Table~\ref{tab:compareGNAS}, when compared to other GNAS methods, both UGAS+GC (GA) and UGAS+GC (DARTS) achieve high-level performance and attain higher ROC-AUC scores on two of the three datasets. For example, UGAS+GC (GA) yields a 4.48\% absolute improvement on Molbace over DCGAS. Furthermore, UGAS+GC (GA) outperforms all other weight-sharing NAS methods. This can be attributed to GC's ability to assign modules with conflicting gradient directions to distinct sub-supernets (see the following paragraph for a more in-depth analysis). Finally, regardless of the weight-sharing method adopted, UGAS outperforms other GNAS methods that search only MPNNs or GTs, further validating the effectiveness of the proposed MPGT block. 
\begin{table}[!t]
\centering
\caption{Ranking correlation comparison with weight-sharing NAS methods on the Cluster dataset}
\label{tab:correlation}
\scalebox{0.9} {\begin{tabular}{l|cc}
\hline
\textbf{Methods} & \textbf{Kendall} & \textbf{Spearman} \\
\hline
UGAS+OS (ICML'18) & $0.37 \pm 0.17$ & $0.50 \pm 0.20$ \\
UGAS+FS (ICML'21) & $0.36 \pm 0.19$ & $0.49 \pm 0.22$ \\
UGAS+GM (ICLR'22) & $0.42 \pm 0.22$ & $0.61 \pm 0.16$ \\
UGAS+LID (AAAI'23)& $\underline{0.45 \pm 0.14}$ & $\underline{0.63 \pm 0.16}$ \\
UGAS+EFS (AAAI'25)& $0.43 \pm 0.20$ & $0.57 \pm 0.19$ \\
\hline
UGAS+GC & $\textbf{0.54} \boldsymbol{\pm} \textbf{0.17}$ & $\textbf{0.73} \boldsymbol{\pm} \textbf{0.19}$ \\
\hline
\end{tabular}}
\begin{tablenotes}
\item[] 
Note: the results obtained from the implementation using GA.
\end{tablenotes}
 \vspace{-0.4cm}
\end{table}

\subsubsection{\textbf{Ranking Correlation Comparison}} 
\label{5.3.2}
As mentioned in Section~\ref{5.3.1}, weight-sharing NAS methods play a crucial role in determining the performance of the final searched subnets. A plausible explanation is that certain weight-sharing methods exhibit the weight coupling problem, leading to substantial differences between the weights inherited by the subnet from the (sub-)supernet and those obtained when training the subnet from scratch. To validate this, we compute the Kendall and Spearman rank correlations between the accuracy differences of the same subnet when evaluated with weights trained from scratch versus those inherited from the (sub-)supernet.  A stronger correlation indicates greater similarity between the weights trained from scratch and the inherited weights (see Appendix~\ref{appendix_rank} for the detailed experimental setup).  As shown in Table~\ref{tab:correlation}, GC outperforms the other weight-sharing methods, achieving the highest Kendall and Spearman correlation scores (i.e., a Spearman score of 0.73). The superior performance can be attributed to GC's ability to assign modules with conflicting gradient directions to distinct sub-supernets, ensuring that the weights inherited by subnets closely resemble those obtained through training from scratch. Consequently, high-performance subnets are thoroughly explored, while poor-performing subnets are neither overestimated nor retained during the search process. Other weight-sharing methods may overestimate the performance of certain poor-performing subnets, leading to deviated search directions and ultimately resulting in suboptimal subnets.

\begin{figure}[!t]
\centering
\includegraphics[width=0.9\columnwidth]{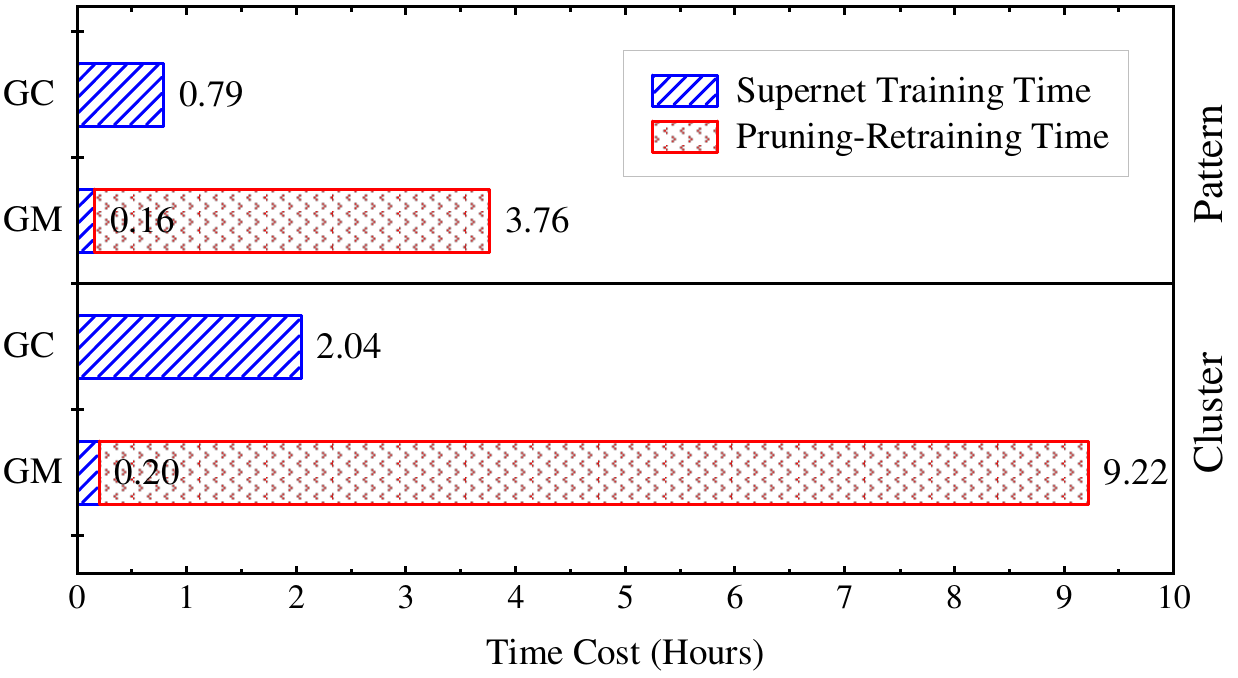} 
\caption{Comparison of supernet partitioning time between GC and GM on Pattern and Cluster datasets.}
 \vspace{-0.2cm}
\label{fig:timecost}
\end{figure}

\subsubsection{\textbf{Time Efficiency Comparison with GM}}

Among the weight-sharing NAS methods discussed in Section~\ref{5.3.2}, GM is the most similar to our proposed GC, as both leverage the gradient-based partitioning principle. Therefore, in Figure~\ref{fig:timecost}, we compare the time required by GC and GM to partition the supernet on the Pattern and Cluster datasets. Generally, the time required for GC to partition a supernet is primarily determined by the supernet training time. In contrast, GM's partitioning time encompasses not only the supernet training time but also the additional time required for pruning and retraining numerous temporary supernets. As shown in Figure~\ref{fig:timecost}, the supernet training time of GC is approximately four times that of GM, owing to its need to retain the computation graphs from the supernet backpropagation for decomposing VJP.  However, the extra overhead of pruning and retraining in GM leads to a significantly longer overall runtime. Notably, the overall supernet partitioning time for GC is approximately 21\% of that for GM. Furthermore, other few-shot NAS methods (e.g., LID \cite{He2023} and EFS \cite{SBS}) employ zero-cost proxy methods for supernet partitioning, thereby avoiding computational overhead associated with gradient computation (see Section~\ref{sec2.2} for more details). However, as evidenced by lower Kendall and Spearman rank correlation coefficients shown in Table~\ref{tab:correlation}, these methods have resulted in weaker consistency in accuracy when the same subnet adopts different weights (i.e., the weights trained from scratch versus the estimated weights inherited from the sub-supernet), potentially leading to inaccurate sub-supernet partitioning.
\subsection{Ablation Studies}
\begin{table}[!t]
\centering
\caption{Ablation study on different parameter $k$ settings}
\scalebox{0.8} {
\begin{tabular}{c|c|cc|c|c}
\hline
\multirow{2}{*}{\textbf{Methods}} & \textbf{\#Split} & \multicolumn{2}{c|}{\textbf{Acc. (\%)}} & \textbf{Avg. } & \textbf{Search Cost}  \\ 
\cline{3-4}
 & \textbf{($k$)}  & \textbf{Top 1} & \textbf{Avg. 5} & \textbf{Params (M)} & \textbf{(GPU Days)}  \\ 
\hline
w/o GC & - & 77.42 & 76.27 & 1.24 & 1.59  \\ 
\hline
\multirow{4}{*}{w/ GC} & 1  & 77.85 & 77.62 & 1.16 & 1.96 \\ 
     & 2 & \textbf{78.14} & \textbf{77.98} & 1.05 & 2.37 \\ 
     & 3  & 77.94 & 77.21 & 1.02 & 3.44   \\ 
     & 4  & 76.81 & 76.19 & 0.90 & 5.55  \\ 
\hline
\end{tabular}}
\label{tab:ablation_k}
 \vspace{-0.4cm}
\end{table}

\begin{table}[!t]
\centering
\caption{Ablation study on different design choices}
\label{tab:ablation_results}
\scalebox{0.85} {
\begin{tabular}{cc|ccc}
\hline
 \multirow{2}{*}{\textbf{MPNNs}} & \multirow{2}{*}{\textbf{GTs}} & \textbf{Cluster}& \textbf{Peptides-func} & \textbf{Peptides-struct}\\
\cline{3-5}
  & & \textbf{Accuracy} $\uparrow$ & \textbf{AP} $\uparrow$ & \textbf{MAE} $\downarrow$\\
\hline
 \ding{51} & \ding{55} & $76.45 \pm 0.19$ & $0.619 \pm 0.008$ & $0.262 \pm 0.002$\\
 \ding{55} & \ding{51} & $75.17 \pm 0.23$ & $0.632 \pm 0.007$ & $0.255 \pm 0.001$\\
 \ding{51} & \ding{51} & $\textbf{78.14} \pm \textbf{0.21}$ & $\textbf{0.661} \pm \textbf{0.013}$ & $\textbf{0.247 } \boldsymbol{}{\pm} \textbf{ 0.002}$\\
\hline
\end{tabular}}
 \vspace{-0.3cm}
\end{table}

\noindent In this section, we conduct ablation experiments to investigate the impact of different numbers of sub-supernets. Specifically, in Table~\ref{tab:ablation_k}, we present the accuracy of the best-performing subnet (Top 1) as well as the average accuracy and the number of parameters of the top five subnets (\mbox{Avg. 5}) under different values of $k$, following the prior study \cite{Hu2022}. As shown in Table~\ref{tab:ablation_k},
increasing $k$ yields a more fine-grained partition of the supernet, which in turn constrains the subnet search space and reduces the average number of parameters. In addition, finer partitioning necessitates training a larger number of sub-supernets and subnets, significantly extending the search time. When $k$ is set to 2 (i.e., resulting in four sub-supernets), UGAS+GA achieves optimal performance. A plausible reason is that when $k$ is set to 1, the weights of the subnet trained from scratch deviate from those inherited from the sub-supernets, preventing the discovery of well-performing subnets. Conversely, while higher weight similarity is maintained, setting $k$ to 3 or 4 further reduces the number of parameters, thereby diminishing the expressive capacity of the subnets.

In addition, we conduct ablation studies on the Cluster, Peptides-func, and Peptides-struct datasets to assess the effectiveness of the joint search framework that integrates both MPNN and GT modules. As shown in Table~\ref{tab:ablation_results}, architectures derived from the joint framework achieve superior performance compared to those searched using only one type of module. Notably, on the Cluster dataset, comprising small-scale graphs, the architectures searched solely from MPNNs outperform those based exclusively on GTs, reflecting the strength of MPNNs in capturing local structural patterns. Conversely, on the Peptides-func and Peptides-struct datasets, which exhibit long-range dependencies, architectures based exclusively on GT modules perform better. These experimental results further underscore the advantage of our MPGT framework in adapting to graphs of diverse sizes.
 
\section{Conclusion}
\noindent To better alleviate the weight coupling problem, we propose the GC method, which incorporates a novel perspective and efficiently partitions the supernet according to the gradient directions of the modules. In addition, we propose the generic framework UGAS, which searches for the optimal combination of MPNNs and GTs. The extensive experiments demonstrate the effectiveness of the proposed GC and UGAS. Future work could expand the set of optional modules to search for GNN architectures.

\section*{Acknowledgments}
\noindent This work was supported by the National Science and Technology Major Project under Grant 2021ZD0112500. We thank the Computing Center of Jilin Province for technical support.

\bibliographystyle{ACM-Reference-Format}
\bibliography{base}

\appendix
\section{Detailed Parameter Settings}
\label{appendix:Hyperparameter}
\noindent In this section, we provide a detailed overview of the main parameter settings for UGAS and GC employed across ten datasets.

\textbf{MPGT Block}: The MPGT Block is the core structure of the searched architectures. On most datasets, we set the number of MPGT layers $ L $ to be between four and ten following the configuration in the prior study \cite{GraphGPS}. For the Cluster dataset, setting $L$ to fifteen achieves better performance. In addition, across all experiments, the number of available modules per layer $n$ is set to six, comprising four MPNN and two GT modules.

\textbf{Supernet Partition}: 
For all experiments conducted in this paper, the number of partition points is set to two, meaning the supernet is partitioned into four (i.e., $ 2^k $) sub-supernets, unless otherwise specified in ablation studies.

\textbf{Genetic Algorithm (GA)}: In the GA, the maximum iterations $ I $ range from 5 to 20, and the population size $ P $ ranges from 20 to 50, adjusted according to different datasets. The crossover probability $ p_c $ and mutation probability $ p_m $ are set to 0.6 and 0.1, respectively. During each iteration, each operator generates $P$ offspring. Finally, from the combined pool of parent and offspring populations (total of $ 3 \times P $ individuals), the top-$ P $ individuals with the highest fitness are retained.

\textbf{Optimizer}: We employ the AdamW optimizer \cite{adamw} during the training of GNN architectures, with default settings of $ \beta_1 = 0.9 $, $ \beta_2 = 0.999 $, and $ \epsilon = 10^{-8} $. Training is initiated with a linear warm-up of the learning rate followed by its cosine decay.

\section{Proof of the Effectiveness of GC}
\label{secproof}
\noindent The prior study \cite{Hu2022} shows that gradient conflicts in supernets cause zigzag trajectories during stochastic gradient descent (SGD), which leads to biased performance predictions of subnets. To mitigate this distortion, GC aims to enhance internal average gradient contribution consistency, thereby improving the accuracy of subnet performance estimation. To prove GC’s ability to achieve this consistency, let $S_{\text{total}}$ denote the overall cosine similarity matrix of the modules in the $l$th layer ($l \in\{2,...,L\}$), which is defined as follows:
\begin{align}
 S_{\text{total}}   = \sum\nolimits_{(i,j) \in \mathcal{M} \times \mathcal{M}} S_{i,j},
\end{align}
where \(\mathcal{M}\) denotes the module set at the $l$th layer. After partitioning, $S_{\text{total}}$ is computed as follows:
\begin{align}
S_{\text{total}} &= S_{\Gamma} + S_{\mathcal{M} \setminus \Gamma} + 2S_{\text{cross}}, \Gamma \subseteq \mathcal{M}, \\
    S_{\Gamma}           &= \sum\nolimits_{(i,j) \in \Gamma \times \Gamma} S_{i,j}, \\
    S_{\mathcal{M} \setminus \Gamma} &= \sum\nolimits_{(i,j) \in (\mathcal{M} \setminus \Gamma) \times (\mathcal{M} \setminus \Gamma)} S_{i,j}, \\
    S_{\text{cross}}     &= \sum\nolimits_{(i,j) \in \Gamma \times (\mathcal{M} \setminus \Gamma)} S_{i,j},
\end{align}
\begin{proposition}
There exists a subset \(\Gamma \subseteq \mathcal{M}\) obtained by a gradient contribution-based minimum-cut algorithm such that the average gradient contribution similarity within \(\Gamma\) is greater than or equal to that of the original set:
\begin{align}
\frac{S_{\Gamma}}{|\Gamma|^2} \geq \frac{S_{\text{total}}}{|\mathcal{M}|^2}.
\end{align}
\end{proposition}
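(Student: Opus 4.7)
The plan is to reduce the claim to an elementary convex-combination inequality and then invoke the global optimality of the Stoer--Wagner cut through its stability under single-vertex moves. Let $n = |\mathcal{M}|$ and $k = |\Gamma|$. Combining the stated identity $S_{\text{total}} = S_{\Gamma} + S_{\mathcal{M}\setminus\Gamma} + 2S_{\text{cross}}$ with the algebraic identity $n^{2} = k^{2} + (n-k)^{2} + 2k(n-k)$, I would rewrite $S_{\text{total}}/n^{2}$ as a genuine convex combination of the three normalized densities
\begin{equation*}
\alpha = \tfrac{S_{\Gamma}}{k^{2}}, \qquad \beta = \tfrac{S_{\mathcal{M}\setminus\Gamma}}{(n-k)^{2}}, \qquad \gamma = \tfrac{S_{\text{cross}}}{k(n-k)},
\end{equation*}
with weights $k^{2}/n^{2}$, $(n-k)^{2}/n^{2}$, and $2k(n-k)/n^{2}$. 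The target inequality $\alpha \geq S_{\text{total}}/n^{2}$ is therefore implied as soon as $\alpha \geq \max(\beta, \gamma)$.

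The inequality $\alpha \geq \beta$ is arranged by relabeling: among the two sides produced by Stoer--Wagner I would declare $\Gamma$ to be the one with the larger internal density. Since the proposition only asserts existence of a suitable $\Gamma$, this is legitimate. The nontrivial step is $\alpha \geq \gamma$, and this is where the min-cut property enters. A globally minimum cut is in particular stable under every single-vertex move; for each $v \in \Gamma$ this translates into $\sum_{j \in \Gamma,\, j \neq v} S_{v,j} \geq \sum_{j \in \mathcal{M}\setminus\Gamma} S_{v,j}$. Summing over $v \in \Gamma$ and using $S_{v,v} = 1$ (immediate from the cosine-similarity definition), I would obtain the two key bounds
\begin{equation*}
S_{\Gamma} - k \;\geq\; S_{\text{cross}}, \qquad S_{\mathcal{M}\setminus\Gamma} - (n-k) \;\geq\; S_{\text{cross}}.
\end{equation*}

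To conclude, I would split on $k$ versus $n/2$. When $k \leq n/2$, the first bound is strong enough on its own: rearranging $S_{\Gamma}(n-k) \geq (S_{\text{cross}} + k)(n-k)$ and using $n - 2k \geq 0$ yields $S_{\Gamma}(n-k) \geq S_{\text{cross}}\, k$, i.e. $\alpha \geq \gamma$. When $k > n/2$, the direct bound on $\Gamma$ is too weak because $\gamma$ now carries the smaller denominator $k(n-k) < k^{2}$; I would pivot to the complementary bound on $\mathcal{M}\setminus\Gamma$, chaining $\alpha \geq \beta \geq (S_{\text{cross}} + (n-k))/(n-k)^{2}$ and cross-multiplying to leave the inequality $S_{\text{cross}}(2k - n) + k(n-k) \geq 0$, which holds because $2k - n > 0$ in this regime.

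The principal obstacle is precisely this asymmetry between the two size regimes: a single-vertex argument applied only on the $\Gamma$-side fails when $\Gamma$ is the majority set, so one is forced to route the estimate through the complementary side, which is only admissible because $\Gamma$ was already chosen to satisfy $\alpha \geq \beta$. A minor technical caveat is that the proof silently relies on $C_{l,v} \neq 0$ for every module so that $S_{v,v} = 1$ is well defined; any module with vanishing gradient contribution would have to be excluded from $\mathcal{M}$ before applying the min-cut solver.
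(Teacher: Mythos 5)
Your scaffolding is sound: the convex-combination identity $S_{\text{total}}/n^2=\frac{k^2}{n^2}\alpha+\frac{(n-k)^2}{n^2}\beta+\frac{2k(n-k)}{n^2}\gamma$, the reduction to $\alpha\geq\max(\beta,\gamma)$, the relabeling so that $\alpha\geq\beta$, and the single-vertex-move bounds $S_{\Gamma}-|\Gamma|\geq S_{\text{cross}}$ and $S_{\mathcal{M}\setminus\Gamma}-|\mathcal{M}\setminus\Gamma|\geq S_{\text{cross}}$ (valid whenever the side being perturbed has at least two modules) are all correct. The genuine gap is in the final algebraic step of \emph{both} cases, where you silently assume $S_{\text{cross}}\geq 0$. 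With $k=|\Gamma|$, the claim $(S_{\text{cross}}+k)(n-k)\geq S_{\text{cross}}k$ is equivalent to $S_{\text{cross}}(n-2k)+k(n-k)\geq 0$; when $S_{\text{cross}}<0$ and $n-2k\geq 2$ this can fail, because the only a priori lower bound is $S_{\text{cross}}\geq -k(n-k)$, so $S_{\text{cross}}(n-2k)$ can be as negative as $-k(n-k)(n-2k)$. The mirror step in the $k>n/2$ case, $S_{\text{cross}}(2k-n)+k(n-k)\geq 0$, has the same defect. This is not a removable technicality in this application: the $S_{i,j}$ are cosine similarities of gradient contributions, the premise of GC is that conflicting modules have \emph{negative} similarity, and the minimum cut is chosen precisely to drive $S_{\text{cross}}$ as negative as possible --- so the regime your algebra excludes is exactly the regime the method is designed to produce.

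The paper's own proof takes a different route that never touches the sign of $S_{\text{cross}}$: assume for contradiction that both normalized densities fall below $\mu=S_{\text{total}}/n^2$; the decomposition then forces $S^*_{\text{cross}}>\mu ab$, while global minimality of the cut gives $S^*_{\text{cross}}\leq\mathbb{E}[S_{\text{cross}}]=\frac{ab}{n(n-1)}\left(S_{\text{total}}-n\right)$ for a uniformly random subset of the same size; combining the two yields $S_{\text{total}}>n^2$, contradicting $S_{i,j}\leq 1$. Only the upper bound on the similarities is used, never positivity, and it needs the cut only to beat the random-partition average rather than to be locally stable. To rescue your local-move approach you would need extra structure --- e.g., positive semidefiniteness of the Gram matrix of the $C_{l,j}$, which limits how negative $S_{\text{cross}}$ can be relative to the within-group sums --- or you could simply replace the $\alpha\geq\gamma$ step with the averaging comparison above. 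Your closing caveat about modules with vanishing gradient contribution (so that $S_{i,i}=1$ is well defined) is well taken and applies equally to the paper's argument.
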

\begin{proof}

Let \(\Gamma^* = \arg\min_{\Gamma \subseteq \mathcal{M}} S_{\text{cross}}\) be the partition scheme obtained by the minimum cut algorithm that minimizes cross-group similarity. Let \(|\mathcal{M}| = n\), \(|\Gamma^*| = a\), \( n-a = b \), and \(  S_{\text{total}}/n^2 =\mu\). 

\noindent\textit{Proof by contradiction}: Suppose both subsets violate the proposition:
\begin{align}
\frac{S_{\Gamma^*}}{a^2} < \mu \quad \text{and} \quad \frac{S_{\mathcal{M}\setminus\Gamma^*}}{b^2} < \mu.
\end{align}
Hence,
\begin{align}
S_{\Gamma^*} + S_{\mathcal{M}\setminus\Gamma^*} < \mu(a^2 + b^2) = \mu(n^2 - 2ab).
\end{align}
Substituting \(S_{\text{total}} = S_{\Gamma^*} + S_{\mathcal{M}\setminus\Gamma^*} + 2S_{\text{cross}}^*\) yields:
\begin{align}
\label{eq:27}
S_{\text{cross}}^* > \mu ab.
\end{align}

The probability that two modules $i$ and $j$ are partitioned into distinct subsets is $\frac{ab}{n(n-1)}$. Hence, for the subset of size $a$, the expected cross-similarity $\mathbb{E}[S_{\text{cross}}] $ is defined as follows:
\begin{align}
\label{eq:28}
\mathbb{E}[S_{\text{cross}}] = \frac{ab}{n(n-1)} \left( S_{\text{total}} - \sum_i S_{i,i} \right).
\end{align}
Because \(\Gamma^*\) minimizes \(S_{\text{cross}}\), we have \(S_{\text{cross}}^* \leq \mathbb{E}[S_{\text{cross}}]\). Substituting this inequality into Eqs.~\eqref{eq:27} and \eqref{eq:28} gives
\begin{align}
\mu ab < \frac{ab}{n(n-1)} \left( S_{\text{total}} - \sum_i S_{i,i} \right).
\end{align}
Next, replace $\mu$ with \( S_{\text{total}}/n^2\) and simplify. This yields the contradiction:
\begin{align}
\label{eq:31}
n \sum_i S_{i,i} < S_{\text{total}}.
\end{align}
However, for gradient metrics, each self-similarity satisfies \(S_{i,i} = 1\), and each off-diagonal term satisfies \(S_{i,j} \leq 1\). Hence,
\begin{align}
\label{eq:32}
S_{\text{total}} = \underbrace{\sum_i S_{i,i}}_{= \sum_i 1 = n} + \sum_{i \neq j} S_{i,j} \leq n \sum_i S_{i,i}.
\end{align}
Eqs. \eqref{eq:31} and \eqref{eq:32} contradict one another. Thus, the partition induced by $\Gamma^*$ ensures that at least one of the subsets ($\Gamma^*$ or $\mathcal{M} \setminus \Gamma^*$) satisfies the inequality \(\frac{S_{\Gamma}}{|\Gamma|^2} \geq \mu\).
\end{proof}

\section{Implementation Details of DARTS}
\label{appendix:darts}
\noindent In Section~\ref{sec5}, we further assess the performance of UGAS using the DARTS search method. The core idea behind DARTS is to relax the discrete architecture search space into a continuous one by introducing learnable architecture parameters, which denote the importance weights of candidate MPNN and GT modules, respectively. These parameters are jointly optimized with the (sub-)supernet weights via gradient descent, enabling the differentiable architecture search. 

Within the MPGT layer, we implement DARTS by assigning learnable weights to each module. For the $l$th layer , the outputs of MPNN and GT modules are aggregated as follows:
\begin{align}
\mathbf{X}^{M}_{l+1}, \mathbf{E}_{l+1} &= \sum_{i=1}^{n_1} \frac{\exp(\alpha_{l,i})}{\sum_{k=1}^{n_1} \exp(\alpha_{l,k})} \times \mathcal{M}_{l,i}(\mathbf{X}_l, \mathbf{E}_l), \\
\mathbf{X}^{T}_{l+1} &= \sum_{j=n_1+1}^{n} \frac{\exp(\alpha_{l,j})}{\sum_{k=n_1+1}^{n} \exp(\alpha_{l,k})} \times \mathcal{M}_{l,j}(\mathbf{X}_l),
\end{align}
where $\alpha_{l,i}$ denotes the module weight for the $i$-th module. Parameters $n$ and $n_1$ denote the total number of modules and the number of MPNN modules, respectively. Notably, consistent with the GA search method, the outputs of the MPNN and GT modules must be computed separately. The softmax normalization ensures that the sum of the weights is 1, maintaining a probabilistic interpretation of module importance.

During the search phase, both module weights (i.e., $\boldsymbol{\alpha}$) and (sub-)supernet weights are simultaneously optimized. After convergence, the final architecture is derived by retaining only the module with the highest weight in each layer (i.e., $\arg\max_{{i \in \{1, \cdots, n_1\}}} \alpha_{l,i}$ for MPNN and $\arg\max_{{j \in \{n_1+1, \cdots, n\}}} \alpha_{l,j}$ for GT). The resulting subnets are subsequently retrained from scratch.

\section{Detailed Experimental Setup for Ranking Correlation}
\label{appendix_rank}

\noindent To conserve computational resources, we randomly reduce the Cluster dataset to 30\% of its original size and set the number of MPGT layers to eight, deviating from our standard setup. Subsequently, we randomly sample the number of 180 architectures from the whole search space, with each subnet trained for 100 epochs. We then compile these architectures and corresponding accuracy metrics into a rank dataset. In the ranking correlation experiment, each weight-sharing NAS method randomly searches for 50 subnets that satisfy the specified requirements (i.e., are among these 180 architectures). These 50 subnets adopt weights trained under the (sub-)supernet and are fine-tuned for a small number of epochs prior to performance evaluation. Finally, to compute the rank correlation coefficient, we rank these 50 architectures based on their performance recorded in the rank dataset and assess their estimated performance after fine-tuning. A higher coefficient suggests that the transferred weights more accurately reflect the true potential of the architectures.

\section{Detailed Comparison of GC and GM}
\label{appendix:gm}

\begin{figure}[!t]
\centering
\includegraphics[width=0.85\columnwidth]{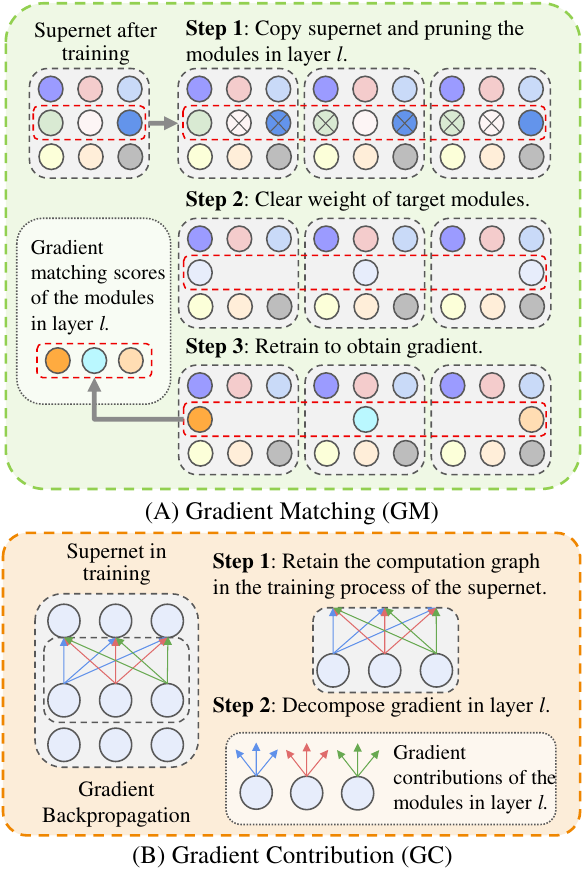} 
\caption{Illustration of the differences between GC and GM.}
\label{fig:comparsion_gm}
\vspace{-0.4cm}
\end{figure}

 \noindent Although both GM and the proposed GC are gradient-based supernet partition methods, their perspectives on analyzing the weight coupling problem are fundamentally different. Specifically, GC posits that the weight coupling problem primarily arises from modules in the $l$th layer delivering conflicting gradients to those in the $l-1$th layer. In contrast, GM computes gradients solely for modules within the same layer, without accounting for their impact on modules in the preceding layer. This fundamental difference in perspective leads to significant variations in the gradient computation methods and performance between these two supernet partition methods.

As shown in Figure~\ref{fig:comparsion_gm}, to compute the gradient for each module, GM duplicates the trained supernet to form a temporary supernet. In this temporary supernet, the gradient of the target module is first cleared while all other parallel modules in the same layer are pruned. This temporary supernet is then trained for a few training epochs, and the module's final weights after training are recorded as its gradient score. This method introduces two major problems:
(1) GM evaluates a specific module by pruning all others, which inherently alters the original supernet architecture. Consequently, it remains unclear whether the target module retains consistent functionality before and after pruning.
(2) GM requires pruning and retraining for each module across the entire supernet, which incurs substantial additional computational costs. To mitigate this, GM limits training for each temporary supernet to a single epoch. However, the gradients accumulated during such brief training are minimal, resulting in poor discriminability \cite{He2023}.

In contrast, GC only requires retaining the computation graph during supernet training. After training, GC directly decomposes this computation graph to extract the gradient contributions of different modules to those in the preceding layer without incurring additional training costs. 

\section{Rationale for Supernet Partitioning Based on Early-Stage Gradients}
\label{appendixf}
\begin{table}[!t]
\centering
\caption{Ranking correlation comparison with weight-sharing NAS methods on the Cluster dataset}
\label{tab:stage_cmp}
\scalebox{1} {\begin{tabular}{l|cc}
\hline
& \textbf{Kendall}\\
\hline
Early Stage vs. Middle Stage & $0.92$ \\
Early Stage vs. Late Stage & $0.86$ \\
Middle Stage vs. Late Stage & $0.98$ \\
\hline
\end{tabular}}
\vspace{-0.4cm}
\end{table}

\noindent In this section, we assess the ranking consistency of the cut weights across partitioning schemes derived from different stage module gradients. Specifically, we train a supernet and record the module gradients at the early, middle, and late stages. As shown in Table~\ref{tab:stage_cmp}, the experimental results demonstrate high ranking consistency across different stages, confirming that partition schemes derived from early‐stage conflicts remain representative throughout training. Moreover, using early-stage conflicts is more time-efficient. Therefore, we adopt the similarity of the gradient contributions between the modules in the early stages of the supernet as the partition criterion.

\begin{table}[!t]
\centering
\caption{Performance comparison with different numbers of MPGT layers on the Cluster dataset}
\label{tab:layer_config}
\begin{tabular}{c|ccccc}
\hline
 \textbf{Layer Number}  & \textbf{2} & \textbf{4} & \textbf{8} & \textbf{12} & \textbf{15} \\ 
\hline
\textbf{Acc. (\%)} & 50.97 & 69.72 & 76.89 & 77.70 & \textbf{78.14} \\ 
\hline
\end{tabular}
\vspace{-0.2cm}
\end{table}
\section{Rationale for Incorporating GC into UGAS}
 Unlike prior GNAS studies that restrict the search to only 2 or 3 layers of MPNNs, our UGAS jointly explores MPNNs and GTs, enabling deeper architectures and more diverse module combinations. Specifically, in UGAS, each layer has $2^n$ possible module configurations, resulting in a total search space of $2^{n\times L}$, where $n$ and $L$ denote the number of candidate modules and the number of layers. For example, when $n=6$ and $L=5$, the number of possible architectures exceeds $10^9$. This vast search space intensifies the weight-coupling challenge, making the integration of GC essential for UGAS. In addition, we conduct additional experiments to assess the performance of UGAS+GC with varying the number of layers. As shown in Table~\ref{tab:layer_config}, the experimental results demonstrate that increasing the number of layers can lead to improved performance.

\section{Performance Comparison of Different Minimum Cut Algorithms}
\label{appendixh}
\noindent  In this section, we assess the impact of different minimum cut algorithms on partition quality. As shown in Table~\ref{tab:partition_compare}, the experimental results indicate that the choice of algorithm has a negligible effect on the quality of the partition. Therefore, in this paper, we adopt the Stoer–Wagner algorithm in all experiments to divide modules into two sets.

\begin{table}[!t]
\centering
\caption{Performance comparison of different Minimum Cut Algorithms}
\label{tab:partition_compare}
\scalebox{0.9} {\begin{tabular}{c|ccccc}
\hline
\textbf{Algorithm} & \textbf{Stoer-Wagner} & \textbf{Karger} & \textbf{METIS} & \textbf{Spectral} & \textbf{Flow} \\
\hline
UGAS+GC & 0.661 & 0.659 & 0.657 & 0.660 & 0.661 \\
\hline
\end{tabular}}
\vspace{-0.2cm}
\end{table}

\section{Architecture Searched by UGAS+GC}
\begin{figure}[!t]
\centering
\includegraphics[width=0.85\columnwidth]{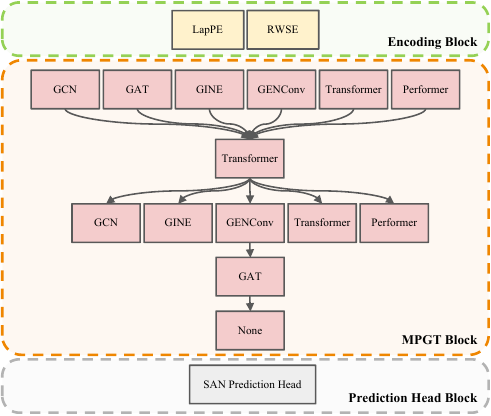} 
\caption{Illustration of the Searched Architecture on the Molsider Dataset.}
\label{fig:search_result}
\vspace{-0.4cm}
\end{figure}

\noindent In Figure~\ref{fig:search_result}, we present a well-performing architecture discovered on the Molsider Dataset. The MPGT Block in this architecture comprises five layers, where ``None" indicates that the layer does not include any GNN modules shown in Table~\ref{tab:search_space} and only contains an MLP layer. For the Encoding Block, following the prior study \cite{GraphGPS}, we directly utilize Laplacian Positional Encoding (LapPE) and Random Walk Structural Encoding (RWSE) to encode graph data, and a SAN-based prediction head~\cite{SAN} is adopted to perform predictions on the Molsider dataset.
\end{document}